\pgfplotsset{compat=newest} 
\pgfplotsset{plot coordinates/math parser=false} 
\newtheorem{theorem}{Theorem}
\newtheorem*{theorem*}{Theorem}
\newtheorem{definition}{Definition}
\newtheorem{proposition}{Proposition}
\newtheorem{remark}{Remark}
\newcolumntype{Y}{>{\centering\arraybackslash}X}
\newcolumntype{C}[1]{>{\centering\let\newline\\\arraybackslash\hspace{0pt}}m{#1}}
\newcolumntype{L}[1]{>{\raggedright\let\newline\\\arraybackslash\hspace{0pt}}m{#1}}
\newcommand{\NAME}{UTS}
\newcommand{\eps}{\varepsilon}
\newenvironment{sketchproof}{%
\proof}{\endproof}
\begin{document}

\title{Unimodal Thompson Sampling for Graph--Structured Arms}
\author{Stefano Paladino and Francesco Trov\`o and Marcello Restelli and Nicola Gatti\\
Dipartimento di Elettronica, Informazione e Bioingegneria\\
Politecnico di Milano, Milano, Italy\\
\{stefano.paladino, francesco1.trovo, marcello.restelli, nicola.gatti\}@polimi.it
}

\maketitle

\begin{abstract}
We study, to the best of our knowledge, the first Bayesian algorithm for unimodal Multi--Armed Bandit (MAB) problems with graph structure. In this setting, each arm corresponds to a node of a graph and each edge provides a relationship, unknown to the learner, between two nodes in terms of expected reward. Furthermore, for any node of the graph there is a path leading to the unique node providing the maximum expected reward, along which the expected reward is monotonically increasing. Previous results on this setting describe the behavior of frequentist MAB algorithms. In our paper, we design a Thompson Sampling--based algorithm whose asymptotic pseudo--regret matches the lower bound for the considered setting. We show that---as it happens in a wide number of scenarios---Bayesian MAB algorithms dramatically outperform frequentist ones. In particular, we provide a thorough experimental evaluation of the performance of our and state--of--the--art algorithms as the properties of the graph vary.
\end{abstract}

\section{Introduction}
Multi--Armed Bandit (MAB) algorithms~\cite{auer2002finite} have been proven to provide effective solutions for a wide range of applications fitting the sequential decisions making scenario. In this framework, at each round over a finite horizon $T$, the learner selects an action (usually called \emph{arm}) from a finite set and observes only the reward corresponding to the choice she made. The goal of a MAB algorithm is to converge to the optimal arm, i.e., the one with the highest expected reward, while minimizing the loss incurred in the learning process and, therefore, its performance is measured through its expected \emph{regret}, defined as the difference between the expected reward achieved by an oracle algorithm always selecting the optimal arm and the one achieved by the considered algorithm. We focus on the so--called \emph{Unimodal} MAB (UMAB), introduced in~\cite{combes2014unimodala}, in which each arm corresponds to a node of a graph and each edge is associated with a relationship specifying which node of the edge gives the largest expected reward (providing thus a partial ordering over the arm space). Furthermore, from any node there is a path leading to the unique node with the maximum expected reward along which the expected reward is monotonically increasing. While the graph structure may be (not necessarily) known \emph{a priori} by the UMAB algorithm, the relationship defined over the edges is discovered during the learning. In the present paper, we propose a novel algorithm relying on the Bayesian learning approach for a generic UMAB setting.

Models presenting a graph structure have become more and more interesting in last years due to the spread of social networks. Indeed, the relationships among the entities of a social network have a natural graph structure. A practical problem in this scenario is the \emph{targeted advertisement} problem, whose goal is to discover the part of the network that is interested in a given product. This task is heavily influenced by the graph structure, since in social networks people tend to have similar characteristics to those of their friends (i.e., neighbor nodes in the graph), therefore interests of people in a social network change smoothly and neighboring nodes in the graph look similar to each other~\cite{mcpherson2001birds,crandall2008feedback}. More specifically, an advertiser aims at finding those users that maximize the ad expected revenue (i.e., the product between click probability and value per click), while at the same time reducing the amount of times the advertisement is presented to people not interested in its content. 

Under the assumption of unimodal expected reward,  the learner can move from low expected rewards to high ones just by climbing them in the graph, preventing from the need of a uniform exploration over all the graph nodes. This assumption reduces the complexity in the search for the optimal arm, since the learning algorithm can avoid to pull the arms corresponding to some subset of non--optimal nodes, reducing thus the regret. Other applications might benefit from this structure, e.g., \emph{recommender systems} which aims at coupling items with those users are likely to enjoy them. Similarly, the use of the unimodal graph structure might provide more meaningful recommendations without testing all the users in the social network. Finally, notice that unimodal problems with a single variable, e.g., in sequential pricing~\cite{jia2011unimodal}, bidding in online sponsored search auctions~\cite{edelman2007strategic} and single--peak preferences economics and voting settings~\cite{mas1995micreconomic}, are graph--structured problems in which the graph is a line.
 
Frequentist approaches for UMAB with graph structure are proposed in~\cite{jia2011unimodal} and~\cite{combes2014unimodala}. Jia and Mannor~\shortcite{jia2011unimodal} introduce the GLSE algorithm with a regret of order $O(\sqrt{T}\log(T))$. However, GLSE performs better than classical bandit algorithms only when the number of arms is $\Theta(T)$. Combes and Proutiere~\shortcite{combes2014unimodala} present the OSUB algorithm---based on KLUCB---achieving asymptotic regret of $O(\log(T))$ and outperforming GLSE in settings with a few arms. To the best of our knowledge, no Bayesian approach has been proposed for unimodal bandit settings, included the UMAB setting we study. However, it is well known that Bayesian MAB algorithms---the most popular is Thompson Sampling (TS)---usually suffer of same order of regret as the best frequentist one (e.g., in unstructured settings~\cite{kaufmann2012thompson}), but they outperform the frequentist methods in a wide range of problems (e.g., in bandit problems without structure~\cite{chapelle2011empirical} and in bandit problems with budget~\cite{xia2015thompson}). Furthermore, in problems with structure, the classical Thompson Sampling (not exploiting the problem structure) may outperform frequentist algorithms exploiting the problem structure. For this reason, in this paper we explore Bayesian approaches for the UMAB setting. More precisely, we provide the following original contributions:
\begin{itemize}
	\item we design a novel Bayesian MAB algorithm, called \NAME{} and based on the TS algorithm;
	\item we derive a tight upper bound over the pseudo--regret for \NAME{}, which asymptotically matches the lower bound for the UMAB setting; 
	\item we describe a wide experimental campaign showing better performance of \NAME{} in applicative scenarios than those of state--of--the--art algorithms, evaluating also how the performance of the algorithms (ours and of the state of the art) varies as the graph structure properties vary.
\end{itemize}


\section{Related work}
Here, we mention the main works related to ours. Some works deal with unimodal reward functions in continuous armed bandit setting~\cite{jia2011unimodal,combes2014unimodalb,kleinberg2008multi}. In~\cite{jia2011unimodal} a successive elimination algorithm, called LSE, is proposed achieving regret of $O(\sqrt{T} \log{T})$. In this case, assumptions over the minimum local decrease and increase of the expected reward is required. Combes and Proutiere~\shortcite{combes2014unimodalb} consider stochastic bandit problems with a continuous set of arms and where the expected reward is a continuous and unimodal function of the arm. They propose the SP algorithm, based on the stochastic pentachotomy procedure to narrow the search space. Unimodal MABs on metric spaces are studied in~\cite{kleinberg2008multi}.

An application--dependent solution to the recommendation systems which exploits the similarity of the graph in social network in targeted advertisement has been proposed in~\cite{valko2014spectral}. Similar information has been considered in~\cite{caron2013mixing} where the problem of cold--start users (i.e., new users) is studied. Another type of structure considered in sequential games is the one of \emph{monotonicity} of the conversion rate in the price~\cite{trovo2015multiarmed}. Interestingly, the assumptions of monotonicity and unimodality are orthogonal, none of them being a special case of the other, therefore the results for monotonic setting cannot be used in unimodal bandits. In~\cite{alon2013bandits,mannor2011bandits}, a graph structure of the arm feedback in an \emph{adversarial} setting is studied. More precisely, they assume to have correlation over rewards and not over the expected values of arms.

\section{Problem Formulation}
A learner receives in input a finite undirected graph MAB setting $G = (A,E)$, whose vertices $A = \{a_1, \ldots, a_K \}$ with $K \in \mathbb{N}$ correspond to the arms and an edge $(a_i a_j) \in E$ exists only if there is a direct partial order relationship between the expected rewards of arms $a_i$ and $a_j$. The leaner knows \emph{a priori} the nodes and the edges (i.e., she knows the graph), but, for each edge, she does not know \emph{a priori} which is the node of the edge with the largest expected reward (i.e., she does not know the ordering relationship). At each round~$t$ over a time horizon of $T \in \mathbb{N}$ the learner selects an arm~$a_i$ and gains the corresponding reward $x_{i,t}$. This reward is drawn from an i.i.d.~random variable $X_{i,t}$ (i.e., we consider a stochastic MAB setting) characterized by an unknown distribution $\mathcal{D}_i$ with finite known support $\Omega \subset \mathbb{R}$ (as customary in MAB settings, from now on we consider $\Omega \subseteq [0,1]$) and by unknown expected value $\mu_i := \mathbb{E}[X_{i,t}]$. We assume that there is a single optimal arm, i.e., there exists a unique arm $a_{i^*}$ s.t.~its expected value $\mu_{i^*} = \max_i \mu_i$ and, for sake of notation, we denote $\mu_{i^*}$ with $\mu^*$.

Here we analyze a graph bandit setting with unimodality property, defined as:
\begin{definition}
	A graph \emph{unimodal MAB} (UMAB) setting $G = (A,E)$ is a graph bandit setting $G$ s.t.~for each sub--optimal arm $a_i, i \neq i^*$ it exists a finite path $p = (i_1 = i, \ldots, i_m = i^*)$ s.t.~$\mu_{i_k} < \mu_{i_{k+1}}$ and $(a_{i_k}, a_{i_{k+1}}) \in E$ for each $k \in \{1, \ldots, m-1\}$. 
\end{definition}
This definition assures that if one is able to identify a non--decreasing path in $G$ of expected rewards, she be able to reach the optimum arm, without getting stuck in local optima. Note that the unimodality property implies that the graph $G$ is connected and therefore we consider only connected graphs from here on.

A policy $\mathfrak{U}$ over a UMAB setting is a procedure able to select at each round~$t$ an arm~$a_{i_t}$ by basing on the history~$h_t$, i.e., the sequence of past selected arms and past rewards gained. The pseudo--regret $R_T(\mathfrak{U})$ of a generic policy $\mathfrak{U}$ over a UMAB setting is defined as:
\begin{equation}\label{pseudoregretdefinition}
	R_T(\mathfrak{U}) := T \mu^* - \mathbb{E}\left[ \sum_{t = 1}^T X_{i_t,t} \right],
\end{equation}
where the expected value $\mathbb{E}[\cdot]$ is taken w.r.t.~the stochasticity of the gained rewards $X_{i_t,t}$ and of the policy $\mathfrak{U}$.

Let us define the neighborhood of arm $a_i$ as $N(i) := \{j | (a_i a_j) \in E\}$, i.e., the set of each index $j$ of the arm~$a_j$ connected to the arm~$a_i$ by an edge $(a_i a_j) \in E$. It has been shown in~\cite{combes2014unimodala} that the problem of learning in a UMAB setting presents a lower bound over the regret $R_T(\mathfrak{U})$ of the following form:
\begin{theorem} \label{teo:lower}
	Let $\mathfrak{U}$ be a uniformly good policy, i.e., a policy s.t.~$R_T(\mathfrak{U}) = o(T^c)$ for each $c > 0$. Given a UMAB setting $G = (A,E)$ we have:
	\begin{equation}\label{equ:lb}
		\liminf_{T \rightarrow \infty} \frac{R_T(\mathfrak{U})}{\log(T)} = \sum_{i \in N(i^*)} \frac{\mu^* - \mu_i}{KL(\mu_i,\mu^*)}
	\end{equation}
	where $KL(p,q) = p \log \left(\frac{p}{q} \right) + (1 - p) \log \left(\frac{1-p}{1-q} \right)$, i.e., the Kullaback--Leibler divergence of two Bernoulli distributions with means $p$ and $q$, respectively.
\end{theorem}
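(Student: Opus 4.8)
The plan is to follow the classical change-of-measure route for bandit lower bounds (Lai and Robbins, and its modern divergence-decomposition form), exploiting one structural fact specific to the UMAB setting: a sub-optimal arm $a_i$ can be turned into the unique optimal arm of a \emph{new} UMAB instance by perturbing only its own reward distribution \emph{if and only if} $a_i \in N(i^*)$. This is exactly why the sum in~\eqref{equ:lb} ranges over $N(i^*)$ and not, as in an unstructured $K$-armed problem, over all sub-optimal arms.

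First, fix $i \in N(i^*)$ and write $N_i(T)$ for the number of rounds up to $T$ in which $\mathfrak{U}$ pulls $a_i$. Consider the instance $\nu'$ that agrees with the given instance $\nu$ on every arm but $a_i$, whose law is replaced by a distribution $\mathcal{D}_i'$ on $\Omega$ with mean $\mu_i' = \mu^* + \eta$ for a small $\eta > 0$. I would first verify that $\nu'$ is again a valid UMAB instance whose unique optimum is $a_i$: any monotone increasing path of $\nu$ from an arm $a_j$ to $a_{i^*}$, extended by the edge $(a_{i^*} a_i) \in E$, stays monotone increasing under $\nu'$ because $\mu^* < \mu_i'$, so every arm still admits a monotone increasing path to $a_i$; and $\mu_i' > \mu^* \ge \mu_\ell$ for every $\ell \neq i$. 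The point is that this construction fails for a non-neighbor $a_j$: a path ending at $a_{i^*}$ with the (unchanged) maximal mean $\mu^*$ cannot be prolonged monotonically to $a_j$ without also altering intermediate arms, so no single-arm perturbation can make $a_j$ optimal while keeping the instance unimodal.

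The information-theoretic core is then routine. By uniform goodness applied to $\nu$, where $a_i$ is sub-optimal with gap $\mu^* - \mu_i > 0$, we get $\mathbb{E}_\nu[N_i(T)] = o(T^c)$ for every $c > 0$; applied to $\nu'$, where $a_i$ is optimal, $\mathbb{E}_{\nu'}[T - N_i(T)] = \sum_{\ell \neq i} \mathbb{E}_{\nu'}[N_\ell(T)] = o(T^c)$ for every $c > 0$. The divergence-decomposition lemma, applied with the history-measurable event $\mathcal{E} = \{N_i(T) \ge T/2\}$, gives
\begin{equation*}
	\mathbb{E}_\nu[N_i(T)] \cdot KL(\mathcal{D}_i, \mathcal{D}_i') \;\ge\; KL\!\left( \mathbb{P}_\nu(\mathcal{E}), \, \mathbb{P}_{\nu'}(\mathcal{E}) \right),
\end{equation*}
where the right-hand $KL$ is the Bernoulli relative entropy of the two scalars. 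Markov's inequality yields $\mathbb{P}_\nu(\mathcal{E}) \le 2\,\mathbb{E}_\nu[N_i(T)]/T \to 0$ and $\mathbb{P}_{\nu'}(\mathcal{E}^c) \le 2\,\mathbb{E}_{\nu'}[T - N_i(T)]/T = o(T^{c-1})$, whence the right-hand side is at least $(1 - o(1)) \log(T)$. Dividing by $\log(T)$, letting $T \to \infty$ and then $\eta \to 0$ so that $KL(\mathcal{D}_i, \mathcal{D}_i') \to KL(\mu_i, \mu^*)$ (exactly, for Bernoulli rewards; for general bounded rewards one takes the infimum over such perturbations, which is bounded above by $KL(\mu_i, \mu^*)$, so the bound stands), I obtain $\liminf_{T \to \infty} \mathbb{E}_\nu[N_i(T)]/\log(T) \ge 1/KL(\mu_i, \mu^*)$.

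Finally, decomposing $R_T(\mathfrak{U}) = \sum_{\ell \neq i^*} (\mu^* - \mu_\ell)\,\mathbb{E}_\nu[N_\ell(T)] \ge \sum_{i \in N(i^*)} (\mu^* - \mu_i)\,\mathbb{E}_\nu[N_i(T)]$, dividing by $\log(T)$ and using that the $\liminf$ of a finite sum of nonnegative sequences is at least the sum of the $\liminf$s yields the stated inequality; it holds with equality because an order-optimal policy attains it (e.g.\ OSUB, and \NAME{} below). The step I expect to cost the most attention is not the Kullback--Leibler bookkeeping, which is standard, but the verification that each perturbed instance is an admissible UMAB instance --- this is where the unimodal graph enters, and where it becomes transparent why only the neighbors of $a_{i^*}$, rather than all sub-optimal arms, contribute to the lower bound.
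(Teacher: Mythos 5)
First, a point of comparison: the paper does not prove this theorem at all --- it is imported verbatim from Combes and Proutiere (2014), so the only meaningful benchmark is the proof in that reference. Your route (change of measure on a single arm, with the structural observation that only an arm $a_i \in N(i^*)$ can be promoted to the unique optimum of a perturbed instance that is still unimodal) is exactly the mechanism behind the cited result, and the divergence-decomposition/Markov-inequality bookkeeping is standard and essentially sound. Two small touch-ups: when you extend a monotone path ending at $a_{i^*}$ by the edge $(a_{i^*} a_i)$, the original path may itself pass through $a_i$ (whose mean you raised to $\mu^*+\eta$), in which case you should truncate the path at $a_i$ rather than extend it; and your argument only yields the inequality $\liminf R_T/\log T \ge \sum_{i\in N(i^*)}(\mu^*-\mu_i)/KL(\mu_i,\mu^*)$ for an arbitrary uniformly good policy --- the equality in the statement is not a property of every uniformly good policy but of asymptotically optimal ones, so appealing to OSUB/UTS establishes that the bound is attained by some policy, not that every $\mathfrak{U}$ attains it (this looseness is in the paper's phrasing of the theorem, not yours, but it is worth being explicit that you prove the lower-bound direction).

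The one genuine error is the parenthetical handling of general bounded rewards. The infimal cost of a single-arm perturbation raising the mean of $\mathcal{D}_i$ above $\mu^*$ is $\mathcal{K}_{\inf}(\mathcal{D}_i,\mu^*)=\inf\{KL(\mathcal{D}_i,\mathcal{D}') : \mathbb{E}[\mathcal{D}']>\mu^*\}$, and for distributions supported in $[0,1]$ this satisfies $\mathcal{K}_{\inf}(\mathcal{D}_i,\mu^*)\ \ge\ KL(\mu_i,\mu^*)$ (apply the data-processing inequality to the binarization kernel $X\mapsto \mathrm{Bernoulli}(X)$), with equality in the Bernoulli case --- the opposite of your claim that the infimum is ``bounded above by $KL(\mu_i,\mu^*)$.'' Consequently the change-of-measure argument for general $\Omega\subset[0,1]$ yields $\liminf \mathbb{E}[N_i(T)]/\log T \ge 1/\mathcal{K}_{\inf}(\mathcal{D}_i,\mu^*)$, which is weaker than $1/KL(\mu_i,\mu^*)$ and does not imply the stated bound; indeed the Bernoulli-KL form is false for, say, near-deterministic rewards. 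The statement as given (and as proved in the reference) should be read as assuming Bernoulli rewards, or restated with $\mathcal{K}_{\inf}$ in the denominator; your proof is correct in the Bernoulli case, which is the case the paper's definition of $KL$ and its experiments actually use.
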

This result is similar to the one provided in~\cite{lai1985asymptotically}, with the only difference that the summation is restricted to the arms laying in the neighborhood of the optimal arm $N(i^*)$ and reduces to it when the optimal arm is connected to all the others (i.e., $N(i^*) \equiv \{1, \ldots, K\}$) or the graph is completely connected (i.e., $N(i) \equiv \{1, \ldots, K\}, \forall i$). We would like to point out that by relying on the assumption of having a single maximum of the expected rewards, we also assure that the optimal arm neighborhood $N(i^*)$ is uniquely defined and, thus, the lower bound inequality in Equation~\ref{equ:lb} is well defined.

\section{The \NAME{} algorithm}

We describe the \NAME{} algorithm and we show that its regret is asymptotically optimal, i.e., it asymptotically matches the lower bound of Theorem~\ref{teo:lower}. The algorithm is an extension of the Thompson Sampling~\cite{thompson1933likelihood} that exploits the graph structure and the unimodal property of the UMAB setting. Basically, the rationale of the algorithm is to apply a simple variation of the TS algorithm to only the arms associated with the nodes that compose the neighborhood of the arm with the highest empirical mean reward, called \emph{leader}.

\subsection{The \NAME{} pseudo--code}

\begin{algorithm}[ht]
\caption{\NAME{}}
\begin{algorithmic}[1]
\STATE {\bf Input:} UMAB setting $G = (V,E)$, Horizon $T$, Priors $\{ \pi_i\}_{i=1}^K$\label{line:setting}
\FOR{$t \in \{ 1, \ldots, T \}$}
	\STATE Compute $\hat{\mu}_{i, T_{i,t}}$ for each $i \in \{1, \ldots, K\}$ \label{line:emp_mean}
	\STATE Find the leader $a_{l(t)}$ 
	\IF{$L_{l(t),t} \bmod |N^+(l(t))| = 0$}
		\STATE Collect reward $x_{l(t),t}$ \label{line:pullleader}
	\ELSE
		\STATE Draw $\theta_{i,t}$ from $\pi_{i,t}$ for each $i \in N^+(l(t))$ \label{line:init_ts}
		\STATE Collect reward $x_{i_t,t}$ where $i_t = \arg \max_i \theta_{i,t}$ \label{line:end_ts}
	\ENDIF
\ENDFOR
\end{algorithmic}
\label{alg:UTS}
\end{algorithm}

The pseudo--code of the \NAME{} algorithm is presented in Algorithm~\ref{alg:UTS}. The algorithm receives in input the graph structure $G$, the time horizon $T$, and a Bayesian prior $\pi_i$ for each expected reward $\mu_i$. At each round $t$, the algorithm computes the empirical expected reward for each arm (Line~\ref{line:emp_mean}):
\begin{equation*}
	\hat{\mu}_{i,t} := \left\{ 
		\begin{array}{lr}
			\dfrac{S_{i_t}}{T_{i,t}} & \text{if } T_{i,t} > 0 \\
			0 & \text{otherwise} 
		\end{array} \right.,
\end{equation*}
where $S_{i,t} = \sum_{h = 1}^{t-1} X_{i,h} \mathbf{1} \{\mathfrak{U}(h) = a_{i} \}$ is the cumulative reward of arm $a_i$ up to round $t$ and $T_{i,t} = \sum_{h = 1}^{t-1} \mathbf{1} \{\mathfrak{U}(h) = a_{i} \}$ is the number of times the arm $a_i$ has been pulled up to round $t$.\footnote{We here denote with $\mathbf{1}\{ \cdot \}$ the indicator function.} After that, \NAME{} selects the arm denoted as the leader $a_{l(t)}$ for round $t$, i.e., the one having the maximum empirical expected reward:
\begin{equation}
	a_{l(t)} = \arg \max_{a_i \in A} \hat{\mu}_{i,t}.
\end{equation}
Once the leader has been chosen, we restrict the selection procedure to it and its neighborhood, considering only arms with indexes in $N^+(l(t)) := N(l(t)) \cup \{l(t)\}$. Denote with $L_{i,t} := \sum_{h = 1}^{t-1} \mathbf{1}\{l(h) = i\}$ the number of times the arm~$a_i$ has been selected as leader before round $t$. If $L_{l(t),t}$ is a multiple of $|N^+(l(t))|$, then the leader is pulled and reward~$x_{l(t),t}$ is gained (Line~\ref{line:pullleader}).\footnote{We here denote with $|\cdot|$ the cardinality operator.} Otherwise, the TS algorithm is performed over arms $a_i$ s.t.~$i \in N^+(l(t))$ (Lines~\ref{line:init_ts}--\ref{line:end_ts}).

Basically, under the assumption of having a prior $\pi_{i}$, we can compute the posterior distribution $\pi_{i,t}$ for $\mu_i$ after $t$ rounds, using the information gathered from the rounds in which $a_i$ has been pulled. We denote with $\theta_{i,t}$ a sample drawn from $\pi_{i,t}$, called \emph{Thompson sample}. For instance, for Bernoulli rewards and by assuming uniform priors we have that $\pi_{i,t} = Beta(1 + S_{i,t}, 1 + T_{i,t} - S_{i,t})$, where $Beta(\alpha, \beta)$ is the beta distribution with parameters $\alpha$ and $\beta$. Finally, the \NAME{} algorithm pulls the arm with the largest Thompson sample $\theta_{i,n}$ and collects the corresponding reward $x_{i_t,t}$. See~\cite{kaufmann2012thompson} for further details.

\begin{remark}
	Assuming that the \NAME{} algorithm receives in input the whole graph $G$ is unnecessary. The algorithm just requires an oracle that, at each round $t$, is able to return the neighborhood $N(l(t))$ of the arm which is currently the leader $a_{l(t)}$. This is crucial in all the applications in which the graph is discovered by means of a series of queries and the queries have a non--negligible cost (e.g., in social networks a query might be computationally costly). Finally, we remark that the frequentist counterpart of our algorithm (i.e., the OSUB algorithm) requires the computation of the maximum node degree $\gamma := \max_i |N(i)|$, thus requiring at least an initial analysis of the entire graph $G$.
\end{remark}

\subsection{Finite--time analysis of \NAME{}}

\begin{restatable}{theorem}{teouts} \label{teo:uts}
Given a UMAB setting $G = (A,E)$, the expected pseudo--regret of the \NAME{} algorithm satisfies, for every $\eps > 0$:

\begin{scriptsize}
\begin{align*}
\mathcal{R}_T(\text{\NAME{}}) \leq (1 + \eps) \sum_{i \in N(i^*)} \frac{\mu^* - \mu_i}{KL(\mu_i,\mu^*)} [\log(T) + \log\log(T)] + \tilde{C},
\end{align*}
\end{scriptsize}

\noindent where $\tilde{C} > 0$ is a constant depending on $\eps$, the number of arms $K$ and the expected rewards $\{\mu_1, \ldots, \mu_K\}$.
\end{restatable}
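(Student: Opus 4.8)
The plan is to follow the now--standard two--layer template for ``leader + local index policy'' algorithms introduced in the OSUB analysis of~\cite{combes2014unimodala}, but with the local index policy replaced by Thompson Sampling, so that the inner layer is controlled by a finite--time TS regret bound in the spirit of~\cite{kaufmann2012thompson}. Throughout, write $T_{i,t}$ for the number of pulls of $a_i$ before round $t$ and $L_{i,t}$ for the number of rounds before $t$ in which $a_i$ was the leader, and start from the exact decomposition $\mathcal{R}_T(\text{\NAME{}}) = \sum_{i \neq i^*}(\mu^* - \mu_i)\,\mathbb{E}[T_{i,T+1}]$. For each suboptimal arm $a_i$ I would split $T_{i,T+1}$ according to the identity of the leader at the round of the pull: pulls occurring while $l(t) = i^*$, and pulls occurring while $l(t) \neq i^*$. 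Observe that an arm with $i \notin N^+(i^*)$ can be pulled \emph{only} while $l(t) \neq i^*$ (either it is itself the leader, or it lies in the neighbourhood of a suboptimal leader), so its whole contribution falls into the second bucket; for $i \in N(i^*)$ the first bucket will carry the $\log T$ term.

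The first key step is a ``correct leader'' lemma: $\mathbb{E}\big[\sum_{t=1}^{T}\mathbf{1}\{l(t)\neq i^*\}\big] \leq C_1$, with $C_1$ depending only on $K$ and $\{\mu_1,\dots,\mu_K\}$. The argument exploits unimodality together with the forced--exploration schedule of \NAME{}: whenever some $a_j$ is the leader, the rule ``pull the leader once every $|N^+(l(t))|$ rounds'' guarantees that every arm of $N^+(j)$---in particular $a_j$ and, by the UMAB definition, a neighbour $a_{j'}$ with $\mu_{j'}>\mu_j$ on the monotone path to $a_{i^*}$---is sampled a positive fraction of the rounds in which $a_j$ is leader. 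Combining a deviation bound for the empirical means of $a_j$ and $a_{j'}$ with an anti--concentration bound on the Thompson samples (to ensure $a_{j'}$ actually wins the local TS comparison, hence accrues pulls, often enough), one shows that a genuinely better neighbour overtakes $a_j$ in empirical mean after a bounded--in--expectation number of leader rounds; iterating along the monotone path transfers the leadership up to $a_{i^*}$, and a symmetric concentration argument shows $a_{i^*}$ is dethroned at most $O(1)$ times in expectation. This bounds, for every $i\notin N^+(i^*)$, $\mathbb{E}[T_{i,T+1}]\leq C_1$, and bounds the $l(t)\neq i^*$ bucket for \emph{every} $i$ by $C_1$ as well.

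The second key step is the inner TS bound. On the (random) set of rounds with $l(t)=i^*$, \NAME{} plays exactly Thompson Sampling on the sub--bandit whose arms are indexed by $N^+(i^*)$ with optimal arm $a_{i^*}$; the extra forced pulls of $a_{i^*}$ and the extra data gathered for the neighbours while $l(t)\neq i^*$ only sharpen the relevant posteriors and can be absorbed by a coupling argument. Applying the finite--time Thompson Sampling analysis to this sub--bandit (for bounded rewards via the standard Bernoulli--ization of the reward, so that the Beta posteriors and the Bernoulli $KL$ of the statement appear, and using horizon $T$ even though these rounds form a length--$\leq T$ subsequence) gives, for each $i\in N(i^*)$,
\begin{scriptsize}
\begin{equation*}
\mathbb{E}\big[\#\{t\leq T : l(t)=i^*,\ i_t=i\}\big] \leq (1+\eps)\,\frac{\log T + \log\log T}{KL(\mu_i,\mu^*)} + C_2 ,
\end{equation*}
\end{scriptsize}
where the $\log\log T$ term and the constant $C_2=C_2(\eps,K,\{\mu_j\})$ are the price of the confidence budget in that analysis. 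Adding the two buckets, summing over $i\in N(i^*)$, and collecting the $O(1)$ contributions of the arms outside $N^+(i^*)$ together with all additive constants into a single $\tilde{C}=\tilde{C}(\eps,K,\{\mu_j\})$ yields the claimed bound; dividing by $\log T$, letting $T\to\infty$ and then $\eps\to 0$ matches the lower bound of Theorem~\ref{teo:lower}, establishing asymptotic optimality.

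I expect the main obstacle to be the ``correct leader'' lemma: unlike the deterministic KL--UCB index used inside OSUB, the Thompson step is randomized, so one must simultaneously (i) lower--bound how often a strictly better neighbour is selected by the local TS comparison, to be sure it accumulates enough pulls for its empirical mean to overtake the current leader's, and (ii) control the rare rounds in which $a_{i^*}$, once installed as leader, temporarily loses the top empirical mean---all with bounds uniform in $T$, so that they feed only into $\tilde{C}$. A secondary technical point is justifying that the inner TS bound, proven for a fixed--horizon self--contained bandit, still applies verbatim when the inner problem is run on a data--dependent subsequence of rounds with interleaved forced exploration and with posteriors fed by some out--of--subsequence samples; I expect this to be routine via monotonicity in the horizon and a coupling on the extra data.
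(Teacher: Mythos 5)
Your top-level skeleton is the same as the paper's: decompose the regret into pulls made while $l(t)=i^*$ (handled by the finite-time Thompson Sampling bound of Kaufmann et al., giving the $(1+\eps)\sum_{i\in N(i^*)}\frac{\mu^*-\mu_i}{KL(\mu_i,\mu^*)}[\log T+\log\log T]$ term) and pulls made while $l(t)\neq i^*$ (to be bounded by a constant), and your ``correct leader'' lemma is exactly the statement the paper needs. The gap is that you leave the proof of that lemma as an acknowledged obstacle, and the route you sketch for it --- an iterated ``overtaking'' argument along the monotone path, relying on an anti-concentration bound for the Thompson samples to guarantee that the better neighbour $a_{j'}$ wins the local comparison often enough --- is precisely the kind of argument the paper says cannot be imported from the OSUB analysis: with a randomized index one has no direct handle on $\mathbb{P}(\theta_{j,t}>\theta_{j',t})$, and the picture of leadership being ``transferred along the path'' is itself shaky, since the leader is the \emph{global} empirical-mean maximizer and can jump to any arm, and can return to $a_j$ repeatedly, so the induction along the path does not by itself give a bound uniform in $T$.

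The paper resolves this differently, and more simply, per arm: for each suboptimal $i$ it bounds $\mathbb{E}[L_{i,T}]$ by $\mathbb{E}[\hat L_{i,T}]$, the leader-count in a fictitious problem restricted to $N^+(i)$ for the whole horizon, and notes that $l(t)=i$ forces $\hat\mu_{i,t}\ge\hat\mu_{i',t}$ for the best neighbour $a_{i'}$. Splitting this event at $\mu_i+\Delta_i/2$ gives two summable tails: the deviation of $\hat\mu_{i,t}$ is controlled by Hoeffding because the forced-exploration rule guarantees the leader itself is pulled at least a $1/|N^+(i)|$ fraction of its leader rounds, and the deviation of $\hat\mu_{i',t}$ is controlled by combining Hoeffding with Proposition~1 of Kaufmann et al., which says that the optimal arm of the local TS instance ($a_{i'}$ in $N^+(i)$) is pulled at least $t^b$ times except on a set of rounds of bounded expectation. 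In other words, the missing ingredient in your plan --- ``the better neighbour accrues enough pulls, uniformly in $T$'' --- is exactly what that proposition supplies off the shelf, and using it on the neighbourhood sub-bandit removes any need for Thompson-sample anti-concentration or for tracking how leadership migrates along the path. If you replace your overtaking induction with this per-arm argument (summing the resulting constants over the $K-1$ suboptimal arms into $\tilde C$), the rest of your write-up, including the treatment of the $l(t)=i^*$ rounds and the final matching with the lower bound, goes through as you describe.
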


\begin{sketchproof}
(The complete version of the proof is reported in the appendices.) At first, we remark that a straightforward application of the proof provided for OSUB is not possible in the case of \NAME{}. Indeed, the use of frequentist upper bounds over the expected reward in OSUB implies that in finite time and with high probability the bounds are ordered as the expected values. Since we are using a Bayesian algorithm, we would require the same assurance over the Thompson samples $\theta_{i,t}$, but we do not have a direct bound over $\mathbb{P} (\theta_{i,t} > \theta_{i',t})$ where $a_{i'}$ is the optimal arm in the neighborhood $N^+(i)$. This fact requires to follow a completely different strategy when we analyze the case in which the leader is not the optimal arm.

The regret of the \NAME{} algorithm $R_T(\text{\NAME{}})$ can be divided in two parts: the one obtained during those rounds in which the optimal arm $a^*$ is the leader, called $\mathcal{R}_1$, and the summation of the regrets in the rounds in which the leader is the arm $a_i \neq a^*$, called $\mathcal{R}_i$. $\mathcal{R}_1$ is obtained when $i^*$ is the leader, thus, the \NAME{} algorithms behaves like Thompson Sampling restricted to the optimal arm and its neighborhood $N^+(i^*)$, and the regret upper bound is the one derived in~\cite{kaufmann2012thompson} for the TS algorithm.

$\mathcal{R}_i$ is upper bounded by the expected number of rounds the arm $a_i$ has been selected as leader $\mathbb{E}[L_{i,T}]$ over the horizon $T$. Let us consider $\hat{L}_{i,T}$ defined as the number of rounds spent with $a_i$ as leader when restricting the problem to its neighborhood $N^+(i)$. $\mathbb{E}[\hat{L}_{i,T}]$ is an upper bound over $\mathbb{E}[L_{i,T}]$, since there is nonzero probability that the \NAME{} algorithm moves in another neighborhood. Since $i \neq i^*$ and the setting is unimodal, there exists an optimal arm $a_{i'}, i' \neq i$ among those in the neighborhood $N(i)$ s.t.~$\mu_{i'} = \max_{i | a_i \in N(i)} \mu_i$ and $\hat{\mu}_{i,t} \geq \hat{\mu}_{i'}$. Thus:

{ \fontsize{8.5pt}{8pt}
\begin{align*}
	&\ \mathcal{R}_i \leq \mathbb{E}[\hat{L}_{i,T}] = \sum_{t = 1}^T \mathbb{E} \left[ \mathbf{1} \lbrace \hat{\mu}_{i,t} = \max_{a_j \in N^+(i)} \hat{\mu}_{j,t} \rbrace \right] \\
	& = \sum_{t = 1}^T \mathbb{P} \left( \hat{\mu}_{i,t} \geq \max_{a_j \in N^+(i)} \hat{\mu}_{j,t} \right) \leq \sum_{t = 1}^T \mathbb{P} \left( \hat{\mu}_{i,t} \geq \hat{\mu}_{i',t} \right) \\
	& = \sum_{t = 1}^T \mathbb{P} \left( \hat{\mu}_{i,t} - \mu_i - \frac{\Delta_i}{2} - \hat{\mu}_{i',t} + \mu_{i'} - \frac{\Delta_i}{2} \geq 0 \right) \\
	& \leq \underbrace{\sum_{t = 1}^T \mathbb{P} \left( \hat{\mu}_{i,t} - \mu_i - \frac{\Delta_i}{2} \geq 0 \right)}_{\mathcal {R}_{i1}} + \underbrace{\sum_{t = 1}^T \mathbb{P} \left( \hat{\mu}_{i',t} - \mu_{i'} + \frac{\Delta_i}{2} \leq 0 \right)}_{\mathcal {R}_{i2}}, \\
\end{align*}
}

\noindent where $\Delta_i = \max_{i' |a_i \in N(i)} \mu_{i'} - \mu_i$ is the expected loss incurred in choosing $a_i$ instead of its best adjacent one $a_{i'}$.

$\mathcal{R}_{i1}$ can be upper bounded by a constant by relying on conditional probability definition and the Hoeffding inequality~\cite{hoeffding1963probability}. Specifically, we rely on the fact that the leader is chosen at least $\left\lfloor \frac{L_{l(t),t}}{|N^+(l(t))|} \right\rfloor$ times. Upper bounding $\mathcal{R}_{i2}$ by a constant term requires the use of Proposition 1 in~\cite{kaufmann2012thompson}, which limits the expected number of times the optimal arm is pulled less than $t^b$ times by TS, where $b \in (0,1)$ is a constant, and the use of a technique already used on $\mathcal{R}_{i1}$. Summing up the regret over $i \neq i^*$ and considering the three obtained bounds concludes the proof.
\end{sketchproof}

\begin{table}[H]
\centering
\caption{Results concerning $R_{\%}(\mathfrak{U},\text{OSUB})$ in the setting with $K = 17$ and $K = 129$ and a line graph.}

\begin{scriptsize}
\begin{tabular}{|r|cc|}
\cline{2-3}
\multicolumn{1}{c|}{} & \multicolumn{2}{c|}{$K$}\\
\cline{2-3}
\multicolumn{1}{c|}{} & $17$ & $129$\\
\hline
KLUCB & $3.08 \pm 0.05$ & $6.51 \pm 0.07$ \\
TS & $1.34 \pm 0.07$ & $2.68 \pm 0.05$ \\
UTS & $\mathbf{0.52 \pm 0.07}$ & $\mathbf{0.76 \pm 0.15}$ \\
\hline
\end{tabular}
\end{scriptsize}
\label{tab:res_line}
\end{table}

\section{Experimental Evaluation}

In this section, we compare the empirical performance of the proposed algorithm \NAME{} with the performance of a number of algorithms. We study the performance of the state--of--the--art algorithm OSUB~\cite{combes2014unimodala} to evaluate the improvement due to the employment of Bayesian approaches w.r.t. frequentist approaches. Furthermore, we study the performance of TS~\cite{thompson1933likelihood} to evaluate the improvement in Bayesian approaches due to the exploitation of the problem structure. For completeness, we study also the performance of  KLUCB~\cite{garivier2011kl}, being a frequentist algorithm that is optimal for Bernoulli distributions.

\subsubsection{Figures of merit} Given a policy $\mathfrak{U}$, we evaluate the average and 95\%--confidence intervals of the following figures of merit:
\begin{itemize}
	\item the pseudo--regret $R_T(\mathfrak{U})$ as defined in Equation~\ref{pseudoregretdefinition}; the lower $R_T(\mathfrak{U})$ the better the performance; 
	\item the regret ratio $R_{\%}(\mathfrak{U}_1, \mathfrak{U}_2)=\frac{R_T (\mathfrak{U}_1)}{R_T (\mathfrak{U}_2)}$ showing the ratio between the total regret of policy $\mathfrak{U}_1$ after $T$ rounds and the one obtained with $\mathfrak{U}_2$; the lower $R_{\%}(\mathfrak{U}_1,\mathfrak{U}_2)$ the larger the relative improvement of $\mathfrak{U}_1$ w.r.t. $\mathfrak{U}_2$.	
\end{itemize}

\paragraph{Line graphs} We initially consider the same experimental settings, composed of line graphs, that are studied in~\cite{combes2014unimodala}. They consider graphs with $K \in \lbrace 17, 129 \rbrace$ arms, where the arms are ordered on a line from the arm with smallest index to the arm with the largest index and with Bernoulli rewards whose averages have a triangular shape with the maximum on the arm in the middle of the line. More precisely, the minimum average is $0.1$, associated with arms $a_1$ and $a_{17}$ when $K = 17$ and with arms $a_1$ and $a_{129}$ with $K = 129$, while the maximum average reward is $\mu^* = 0.9$, associated with arm $a_9$ when $K = 17$ and with arm $a_{65}$ with $K = 129$. The averages decrease linearly from the maximum one to the minimum one.

For both the experiments, we average the regret over $100$ independent trials of length $T = 10^5$. We report  $R_t(\mathfrak{U})$ for each policy $\mathfrak{U}$ as~$t$ varies in Fig.~\ref{img:res_line}(a), for $K = 17$, and in Fig.~\ref{img:res_line}(b), for $K = 129$. The \NAME{} algorithm outperforms all the other algorithms along the whole time horizon, providing a significant improvement in terms of regret w.r.t.~the state--of--the--art algorithms. In order to have a more precise evaluation of the reduction of the regret w.r.t.~OSUB algorithm, we report $R_{\%}(\mathfrak{U},\text{OSUB})$ in Tab.~\ref{tab:res_line}. As also confirmed below by a more exhaustive series of experiments, in line graphs the relative improvement of performance due to \NAME{} w.r.t.~OSUB reduces as the number of arms increases, while the relative improvement of performance due to \NAME{} w.r.t.~TS increases as the number of arms increases. 

\begin{figure}[ht!]		
		\centering
		\subfloat[]{
				\hspace{-0.25cm}
				\includegraphics[scale=0.68]{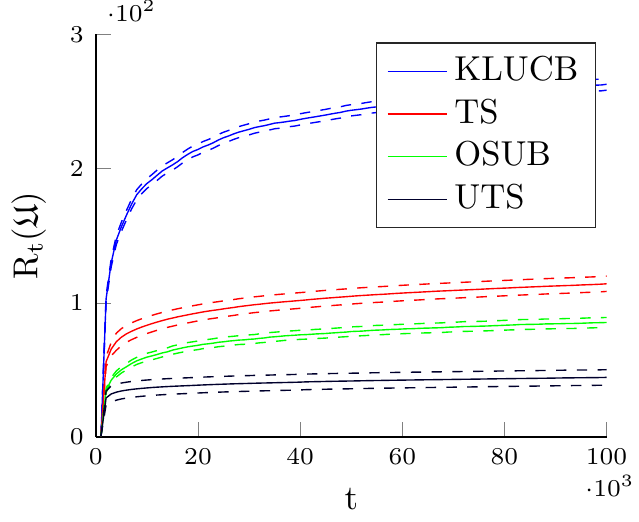}
		}
		\subfloat[]{
				\hspace{-0.4cm}
				\includegraphics[scale=0.68]{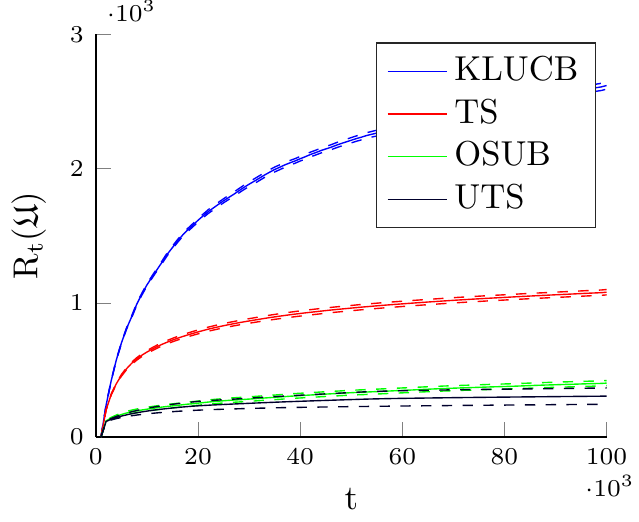}
		}
				\caption{Results for the pseudo--regret $R_t(\mathfrak{U})$ in line graphs settings with $K = 17$ (a) and $K = 129$ (b) as defined in~\cite{combes2014unimodala}.}
\label{img:res_line}
\end{figure}

\subsubsection{Erd\H{o}s-R\'enyi graphs} To provide a thorough experimental evaluation of the considered algorithms in settings in which the space of arms has a graph structure, we generate graphs using the model proposed by Erd\H{o}s and R\'enyi~\shortcite{erdds1959random}, which allows us to simulate graph structures more complex than a simple line. An Erd\H{o}s-R\'enyi graph is generated by connecting nodes randomly: each edge is included in the graph with probability $p$, independently from existing edges. We consider connected graphs with $K \in \lbrace 5, 10, 20, 50, 100, 1000 \rbrace$ and with probability $p \in \lbrace 1, \frac{1}{2}, \frac{\log(K)}{K}, \ell \rbrace$, where $p = 1$ corresponds to have a fully connected graph and therefore the graph structure is useless, $p = \frac{1}{2}$ corresponds to have a number of edges that increases linearly in the number of nodes, $p=\frac{\log(K)}{K}$ corresponds to have a few edges w.r.t. the nodes, and we use $p=\ell$ to denote line graphs (these line graphs differ from those used for the experimental evaluation discussed above for the reward function, as discussed in what follows). We use different values of $p$ in order to see how the performance of \NAME{} changes w.r.t.~the number of edges in the graph; we remark that such an analysis is unexplored in the literature so far. The optimal arm is chosen randomly among the existing arms and its reward is given by a Bernoulli distribution with expected value $0.9$. The rewards of the suboptimal arms are given by Bernoulli distributions with expected value depending on their distance from the optimal one. More precisely, let $d^*_i$ be the shortest path from the $i$--th arm to the optimal arm and let:
\begin{equation*}
	d^*_{\max} = \max_{i \in \lbrace 1, \dots, K \rbrace} d^*_i
\end{equation*} 
be the maximum shortest path of the graph. The expected reward of the $i$--th arm is:
\begin{equation*}
	\mu_i = 0.9 - d^*_i \dfrac{(0.9 - 0.1)}{d^*_{\max}},
\end{equation*}
i.e., the arm with $d^*_{\max}$ has a value equal to $0.1$ and the expected rewards of the arms along the path from it to the optimal arm are evenly spaced between $0.1$ and $0.9$. We generate $10$ different graphs for each combination of $K$ and $p$ and we run $100$ independent trials of length $T = 10^5$ for each graph. We average the regret over the results of the $10$ graphs. 

In Tab.~\ref{tab:res_graphs}, we report $R_T(\mathfrak{U})$ for each combination of policy $\mathfrak{U}$, $K$, and $p$. It can be observed that the \NAME{} algorithm outperforms all the other algorithms, providing in every case the smallest regret except for $K=1000$ and $p = \ell$. Below we discuss how the relative performance of the algorithms vary as the values of the parameters $K$ and $p$ vary.

\begin{table}[t!]
		\centering
\caption{Results concerning $R_T(\mathfrak{U})$ ($T = 10^5$) in the setting with Erd\H{o}s-R\'enyi graphs.}
{
\fontsize{6}{5}
\setlength{\tabcolsep}{2pt}  
\begin{tabular}{|c|c|c|cccc|}
\cline{4-7}
\multicolumn{3}{c|}{} & \multicolumn{4}{c|}{$p$}\\
\cline{4-7}
\multicolumn{3}{c|}{} & $1$ & $1/2$ & $\log(K)/K$ & $\ell$ \\
\hline
\multirow{24}{*}{\rotatebox[origin=c]{90}{$K$}} & \multirow{4}{*}{\rotatebox[origin=c]{90}{$5$}} & \scriptsize{KLUCB} & $34 \pm 0.4$ & $50 \pm 1.5$ & $52 \pm 3.7$ & $56 \pm 2.2$ \\
& & \scriptsize{TS} & $18 \pm 0.2$ & $23 \pm 0.6$ & $24 \pm 1.3$ & $25 \pm 0.7$ \\
& & \scriptsize{OSUB} & $34 \pm 0.3$ & $32 \pm 7.2$ & $35 \pm 5.8$ & $31 \pm 4.1$ \\
& & \scriptsize{UTS} & \boldmath$17 \pm 0.1$ & \boldmath$15 \pm 2.4$ & \boldmath$16 \pm 2.2$ & \boldmath$14 \pm 1.3$ \\
\cline{2-7}
& \multirow{4}{*}{\rotatebox[origin=c]{90}{$10$}} & \scriptsize{KLUCB} & $77 \pm 0.5$ & $107 \pm 5.5$ & $127 \pm 11.2$ & $159 \pm 7.0$ \\
& & \scriptsize{TS} & $40 \pm 0.2$ & $50 \pm 2.0$ & $56 \pm 3.8$ & $67 \pm 2.5$ \\
& & \scriptsize{OSUB} & $77 \pm 0.3$ & $76 \pm 8.1$ & $57 \pm 5.6$ & $70 \pm 8.1$ \\
& & \scriptsize{UTS} & \boldmath$39 \pm 0.2$ & \boldmath$35 \pm 3.2$ & \boldmath$27 \pm 2.1$ & \boldmath$34 \pm 2.4$ \\
\cline{2-7}
& \multirow{4}{*}{\rotatebox[origin=c]{90}{$20$}} & \scriptsize{KLUCB} & $163 \pm 0.7$ & $217 \pm 6.2$ & $262 \pm 16.2$ & $386 \pm 21.3$ \\
& & \scriptsize{TS} & $84 \pm 0.5$ & $102 \pm 2.3$ & $117 \pm 5.7$ & $157 \pm 6.9$ \\
& & \scriptsize{OSUB} & $163 \pm 0.8$ & $148 \pm 14.9$ & $86 \pm 14.6$ & $124 \pm 11.7$ \\
& & \scriptsize{UTS} & \boldmath$83 \pm 0.3$ & \boldmath$70 \pm 6.0$ & \boldmath$44 \pm 4.8$ & \boldmath$65 \pm 8.8$ \\
\cline{2-7}
& \multirow{4}{*}{\rotatebox[origin=c]{90}{$50$}} & \scriptsize{KLUCB} & $420 \pm 0.7$ & $560 \pm 15.0$ & $686 \pm 30.5$ & $1132 \pm 49.2$ \\
& & \scriptsize{TS} & $217 \pm 0.5$ & $262 \pm 4.4$ & $303 \pm 10.0$ & $454 \pm 19.9$ \\
& & \scriptsize{OSUB} & $420 \pm 1.0$ & $382 \pm 35.6$ & $162 \pm 13.9$ & $240 \pm 15.8$ \\
& & \scriptsize{UTS} & \boldmath$216 \pm 0.7$ & \boldmath$182 \pm 14.2$ & \boldmath$89 \pm 5.5$ & \boldmath$156 \pm 30.1$ \\
\cline{2-7}
& \multirow{4}{*}{\rotatebox[origin=c]{90}{$100$}} & \scriptsize{KLUCB} & $846 \pm 2.0$ & $1134 \pm 17.8$ & $1313 \pm 59.7$ & $2327 \pm 63.5$ \\
& & \scriptsize{TS} & \boldmath$436 \pm 1.1$ & $528 \pm 4.9$ & $586 \pm 18.4$ & $973 \pm 31.8$ \\
& & \scriptsize{OSUB} & $846 \pm 2.7$ & $786 \pm 39.0$ & $226 \pm 27.1$ & $369 \pm 10.7$ \\
& & \scriptsize{UTS} & \boldmath$437 \pm 0.5$ & \boldmath$372 \pm 15.2$ & \boldmath$141 \pm 9.1$ & \boldmath$290 \pm 42.3$ \\
\cline{2-7}
& \multirow{4}{*}{\rotatebox[origin=c]{90}{$1000$}} & \scriptsize{KLUCB} & $8505 \pm 12.2$ & $11247 \pm 60.1$ & $12024 \pm 464.7$ & $10640 \pm 291.5$ \\
& & \scriptsize{TS} & \boldmath$4391 \pm 3.4$ & $5262 \pm 23.0$ & $5478 \pm 151.3$ & $6554 \pm 115.2$ \\
& & \scriptsize{OSUB} & $8493 \pm 13.6$ & $7761 \pm 153.4$ & $1151 \pm 45.0$ & \boldmath$1165 \pm 20.7$ \\
& & \scriptsize{UTS} & \boldmath$4388 \pm 5.2$ & \boldmath$3718 \pm 62.9$ & \boldmath$1000 \pm 14.2$ & \boldmath$1165 \pm 41.8$ \\
\hline
\end{tabular}
}
\label{tab:res_graphs}
\end{table}

\emph{Consider the case with $p = 1$.} The performance of \NAME{} and TS are approximately equal and the same holds for the performance of OSUB and KLUCB. This is due to the fact that the neighborhood of each node is composed by all the arms, the graphs being fully connected, and therefore \NAME{} and OSUB cannot take any advantage from the structure of the problem. We notice, however, that \NAME{} and TS have not the same behavior and that \NAME{} always performs slightly better than TS. It can be observed in Fig.~\ref{img:res_p1} with $K = 5$ and $p = 1$ that the relative improvement is mainly at the beginning of the time horizon and that it goes to zero as $K$ increases (the same holds for OSUB w.r.t.~KLUCB). The reason behind this behavior is that \NAME{} reduces the exploration performed by TS in the first rounds, forcing the algorithm to pull the leader---chosen as the arm maximizing the empirical mean---for a larger number of rounds.

\begin{figure}[ht!]		
		\centering
		\includegraphics[scale=0.68]{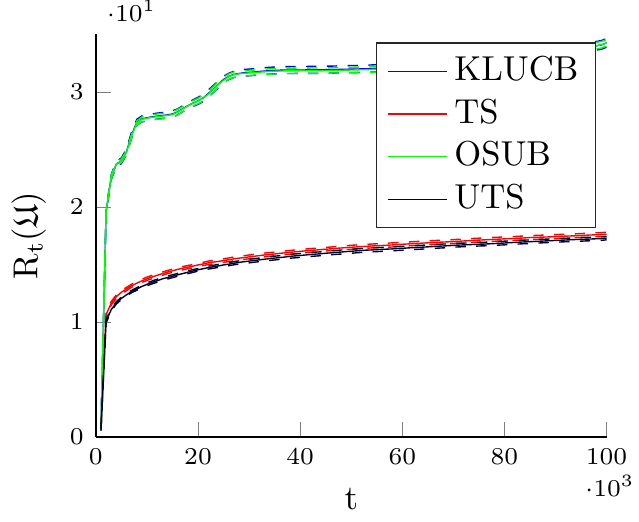}
		\caption{Results for the pseudo--regret $R_t(\mathfrak{U})$ in the setting with $K = 5$ and $p = 1$.}
\label{img:res_p1}
\end{figure}

\emph{Consider the case with $p = \frac{1}{2}$.} In the considered experimental setting, the relative performance of the algorithms does not depend on $K$. The ordering, from the best to the worst, over the performance of the algorithms is: \NAME{}, TS, OSUB, and finally KLUCB. Surprisingly, even the dependency of the following ratios on $K$ is negligible: $R_{\%}(\text{UTS}, \text{TS}) = 0.68 \pm 0.03$, $R_{\%}(\text{UTS}, \text{OSUB}) = 0.47 \pm 0.01$, and $R_{\%}(\text{OSUB}, \text{KLUCB}) = 0.68 \pm 0.03$. This shows that the relative improvement due to \NAME{} is constant w.r.t. TS and OSUB as $K$ varies. These results raise the question whether the relative performance of OSUB and TS would be the same, except for the numerical values, for every $p$ constant w.r.t. $K$. To answer to this question, we consider the case in which $p=0.1$, corresponding to the case in which the number of edges is linear in $K$, but it is smaller than the case with $p=\frac{1}{2}$. The results in terms of $R_T(\mathfrak{U})$, reported in Table~\ref{tab:p01} show that OSUB outperforms TS for $K\geq 10$, suggesting that, when $p$ is constant in $K$, OSUB may or may not outperform TS depending on the specific pair $(p,K)$.

\begin{table}[b]
\centering
\caption{Results concerning $R_T(\mathfrak{U})$ ($T = 10^5$) in the setting with Erd\H{o}s-R\'enyi graphs and $p = 0.1$.}
\label{tab:p01}
\begin{tabular}{|r|cccccc|}
\cline{2-7}
\multicolumn{1}{c|}{}		&\multicolumn{6}{c|}{$K$}		\\ \cline{2-7}
\multicolumn{1}{c|}{}		&	$5$		&	$10$		&	$20$		&	$50$			&  $100$	&	$1000$		\\	\hline
TS    	&	$\mathbf{25}$ 	& 	$66$		&  	$162$	&  	$278$		&  $519$	&  	$4564$		\\	
OSUB  	&	$29$  	&	$\mathbf{64}$  	&	$\mathbf{126}$	&  	$\mathbf{144}$		&  $\mathbf{266}$  	&	$\mathbf{2358}$		\\	\hline
\end{tabular}
\end{table}

\emph{Consider the case with $p = \frac{\log(K)}{K}$.}  The ordering over the performance of the algorithms changes as $K$ varies. More precisely, while \NAME{} keeps to be the best algorithm for every $K$ and KLUCB the worst algorithm for every $K$, the ordering between TS and OSUB changes. When $K \leq 10$ TS performs better than OSUB, instead when $K \geq 20$ OSUB outperforms TS, see Fig.~\ref{img:res_p_log}. This is due to the fact that, with a  small number of arms, exploiting the graph structure is not sufficient for a frequentist algorithm to outperform the performance of TS, while with many arms exploiting the graph structure even with a frequentist algorithm is much better than employing a general-purpose Bayesian algorithm. The ratio $R_{\%}(\text{UTS},\text{TS})$ monotonically decreases as $K$ increases, from $0.66$ when $K=5$ to $0.19$ when $K=1000$, suggesting that exploiting the graph structure provides advantages as $K$ increases. Instead, the ratio $R_{\%}(\text{UTS},\text{OSUB})$ monotonically increases as $K$ increases, from $0.45$ when $K=5$ to $0.94$ when $K=1000$, suggesting that the improvement provided by employing Bayesian approaches reduces as $K$ increases as observed above in line graphs.

\emph{Consider the case with $p = \ell$.} As in the case discussed above, OSUB is outperformed by TS for a small number of arms ($K \leq 10$), while it outperforms TS for many arms ($K \geq 20$). The reason is the same above. Similarly, the ratio $R_{\%}(\text{UTS},\text{TS})$ monotonically decreases as $K$ increases, from $0.58$ when $K = 5$ to $0.18$ when $K = 1000$, and the ratio $R_{\%}(\text{UTS},\text{OSUB})$ monotonically increases as $K$ increases, from $0.45$ when $K = 5$ to  $1.00$ when $K = 1000$. This confirms that the performance of UTS and the one of OSUB asymptotically match as $K$ increases when $p = \ell$ (as well as $p=\frac{\log(K)}{K}$). In order to investigate the reasons behind such a behavior, we produce an additional experiment with the line graphs of Combes and Proutiere~\shortcite{combes2014unimodala} except that the maximum expected reward is set to $0.108$ when $K = 17$ and $0.165$ when $K = 129$ (thus, given any edge with terminals $i$ and $i+1$, we have $|\mu_i -\mu_{i+1}|= 0.001$). What we observe (details of these experiments and those described below are in the appendices) is that, on average, OSUB outperforms \NAME{}  at $T=10^5$  suggesting that, when it is necessary to repeatedly distinguish between three arms that have very similar expected rewards, frequentist methods may outperform the Bayesian ones. This is no longer true when $T$ is much larger, e.g., $T=10^7$, where UTS outperforms OSUB (interestingly, differently from what happens in the other topologies, in line graphs with very small $|\mu_i -\mu_{i+1}|$, the average $R_T(\text{UTS})$ and $R_T(\text{OSUB})$ cross a number of times during the time horizon). Futhermore, we evaluate how the relative performance of OSUB w.r.t.~\NAME{} varies for $|\mu_i -\mu_{i+1}| \in \{0.001, 0.002, 0.005\}$, observing it improves as $|\mu_i -\mu_{i+1}|$ decreases. Finally, we evaluate whether this behavior emerges also in Erd\H{o}s-R\'enyi  graphs in which $p = \frac{c}{K}$ where $c$ is a constant (we use $p=\frac{5}{K},\frac{10}{K}$) and we observe that \NAME{} outperforms OSUB, suggesting that line graphs with very small $|\mu_i -\mu_{i+1}|$ are pathological instances for \NAME{}.

\begin{figure}[t!]	
				\centering
		\subfloat[]{
				\hspace{-0.25cm}
				\includegraphics[scale=0.68]{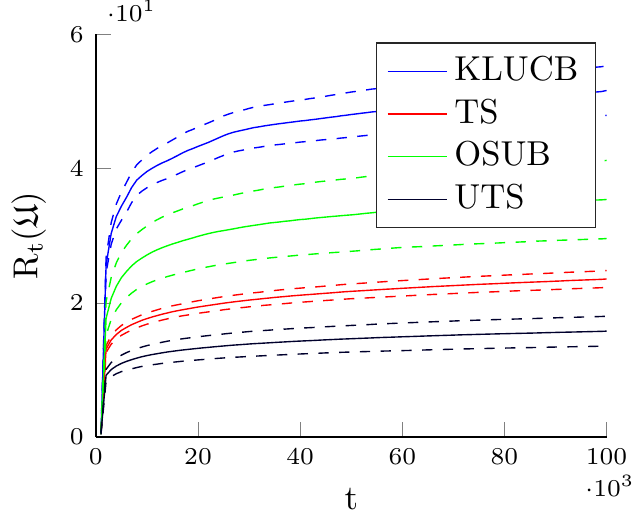}
		}
		\subfloat[]{
				\hspace{-0.4cm}
				\includegraphics[scale=0.68]{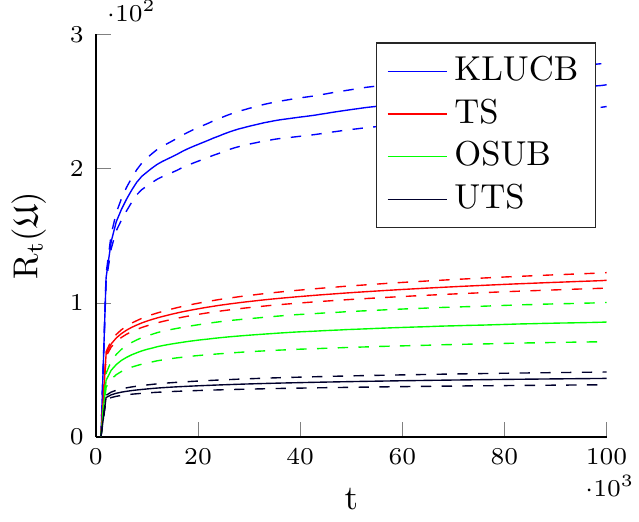}
		}
				\caption{Results for the pseudo--regret $R_t(\mathfrak{U})$ in the setting with $K = 5$ (a) and  $K = 20$ (b) and $p = \frac{\log(K)}{K}$.}
		\label{img:res_p_log}
\end{figure}

\section{Conclusions and Future Work}

In this paper, we focus on the Unimodal Multi--Armed Bandit problem with graph structure in which each arm corresponds to a node of a graph and each edge is associated with a relationship in terms of expected reward between its arms. We propose, to the best of our knowledge, the first Bayesian algorithm for the UMAB setting, called \NAME{}, which is based on the well--known Thompson Sampling algorithm. We derive a tight upper bound for \NAME{} that asymptotically matches the lower bound for the UMAB setting, providing a non-trivial derivation of the bound. Furthermore, we present a thorough experimental analysis showing that our algorithm outperforms the state--of--the--art methods.

In future, we will evaluate the performance of the algorithms considered in this paper with other classes of graphs, e.g., Barab\'{a}si--Albert and lattices. Future development of this work may consider an analysis of the proposed algorithm in the case of time--varying environments, i.e., the expected reward of each arm varies over time, assuming that the unimodal structure is preserved. Another interesting study may consider the case of a continuous decision space.

\bibliographystyle{aaai}
\bibliography{references}

\clearpage
\appendix
\onecolumn
\subsection{Appendix A: Proof of Theorem~\ref{teo:uts}}
\teouts*

\begin{proof}
At first, the regret of the \NAME{} algorithm $R_T(\text{\NAME{}})$ can be rewritten by dividing the $T$ rounds in two sets: those rounds in which the best arm $a^*$ is the leader , i.e., $l(t) = i^*$, and those in which the leader is another arm, i.e., $l(t) \neq i^*$:
\begin{align*}
	R_T(\text{\NAME{}}) &= \sum_{i \neq i^*} (\mu^* - \mu_i) \mathbb{E} [T_{i,T}]\\
	& = \sum_{i \neq i^*} (\mu^* - \mu_i) \mathbb{E} \left[ \sum_{t=1}^{T} \mathbf{1} \lbrace i_t = i \rbrace \right]\\
	&= \underbrace{\sum_{i \neq i^*} (\mu^* - \mu_i) \mathbb{E} \left[ \sum_{t=1}^{T} \mathbf{1} \lbrace l(t) = i^*\wedge i_t = i \rbrace \right]}_{\mathcal{R}_1} + \\
	&+ \underbrace{\sum_{i \neq i^*} (\mu^* - \mu_i) \mathbb{E} \left[ \sum_{t=1}^{T} \mathbf{1} \lbrace l(t) \neq i^*\wedge i_t = i \rbrace \right]}_{\mathcal{R}_2}\\
\end{align*}
Let us focus on $\mathcal{R}_1$. When $i^*$ is the leader, the proposed algorithm behaves like Thompson Sampling restricted to the optimal arm and its neighborhood $N^+(i^*)$, and the regret upper bound is the one presented in Theorem~1 in~\cite{kaufmann2012thompson} for TS algorithm, i.e., for every $\eps > 0 $:
\begin{equation}\label{equ:tsbound}
	\mathcal{R}_1 \leq (1 + \eps) \sum_{i \in N(i^*)} \frac{\mu^* - \mu_i}{KL(\mu_i,\mu^*)} [\log(T) + \log\log(T)] + C_1,
\end{equation}
where $C_1$ is an appropriate constant depending on $\eps$ and on the expected rewards $\mu_i$ of arms in $N^+(i^*)$.

Now let us consider $\mathcal{R}_2$, we have:
\begin{align*}
	\mathcal{R}_2 &= \sum\limits_{i \neq i^*} \underbrace{(\mu^* - \mu_i)}_{ \leq 1} \mathbb{E} \left[ \sum_{t = 1}^{T} \mathbf{1} \lbrace l(t) \neq i^*\wedge i_t = i \rbrace \right] \\
	& \leq \sum\limits_{i \neq i^*} \mathbb{E} \left[ L_{i,T} \right].
\end{align*}
Here we want to upper bound the number of times $a_i$ has been the leader $L_{i,T}$ with $\hat{L}_{i,T}$ defined as the number of rounds spent with $a_i$ as leader in the case only its neighborhood is considered during the whole time horizon $T$. This is clearly an upper bound over $L_{i,T}$, since there is nonzero probability that the \NAME{} algorithms moves in another neighborhood. From now on in the proof the analysis is carried on an algorithm working only on a unique neighborhood $N(i)$. 
\begin{align*}
	\mathcal{R}_2 & \leq \sum\limits_{i \neq i^*} \mathbb{E} \left[ L_{i,T} \right] \leq \sum\limits_{i \neq i^*} \mathbb{E} \left[ \hat{L}_{i,T} \right] = \sum\limits_{i \neq i^*} \sum_{t = 1}^T \mathbb{E} \left[ \mathbf{1} \lbrace l(t) = i \rbrace \right] \\
	& = \sum\limits_{i \neq i^*} \sum_{t = 1}^T \mathbb{E} \left[ \mathbf{1} \lbrace \hat{\mu}_{i,t} = \max_{a_j \in N(i)} \hat{\mu}_{j,t} \rbrace \right], \\
\end{align*}
where, with abuse of notation, $l(t)$ is the leader at round $t$ in this new problem where only $N(i)$ is considered.

When $i \neq i^*$ is the leader, $a_{i}$ is not the optimal arm. Thus, since we are in a unimodal setting, it exists an optimal arm $a_{i'} \in N(i), i' \neq i$ s.t.~$\mu_{i'} = \max_{i | a_i \in N(i)} \mu_i$. Nonetheless, since $a_{i}$ is the leader, its empirical mean is the maximum in its neighborhood and, in particular, $\hat{\mu}_{i,t} \geq \hat{\mu}_{i'}$. Thus, we have:
\begin{align*}
	\mathcal{R}_2 & \leq \sum\limits_{i \neq i^*} \sum_{t = 1}^T \mathbb{E} \left[ \mathbf{1} \lbrace \hat{\mu}_{i,t} = \max_{a_j \in N(i)} \hat{\mu}_{j,t} \rbrace \right] \\
	& \leq \sum\limits_{i \neq i^*} \sum_{t = 1}^T \mathbb{E} \left[ \mathbf{1} \lbrace \hat{\mu}_{i,t} \geq \hat{\mu}_{i',t} \rbrace \right] \\
	& = \sum\limits_{i \neq i^*} \sum_{t = 1}^T \mathbb{P} \left( \hat{\mu}_{i,t} \geq \hat{\mu}_{i',t} \right) \\
	& = \sum\limits_{i \neq i^*} \sum_{t = 1}^T \mathbb{P} \left( \hat{\mu}_{i,t} - \mu_i - \frac{\Delta_i}{2} - \hat{\mu}_{i',t} + \mu_{i'} - \frac{\Delta_i}{2} \geq 0 \right) \\
	& \leq \sum\limits_{i \neq i^*} \left[ \underbrace{\sum_{t = 1}^T \mathbb{P} \left( \hat{\mu}_{i,t} - \mu_i - \frac{\Delta_i}{2} \geq 0 \right)}_{\mathcal {R}_{i1}} + \underbrace{\sum_{t = 1}^T \mathbb{P} \left( \hat{\mu}_{i',t} - \mu_{i'} + \frac{\Delta_i}{2} \leq 0 \right)}_{\mathcal {R}_{i2}} \right], \\
\end{align*}
where $\Delta_i = \max_{i' |a_i \in N(i)} \mu_{i'} - \mu_i$ denotes the expected loss incurred in choosing arm $a_i$ instead of its best adjacent one $a_{i'}$.

Let us focus on $\mathcal{R}_{i1}$:
\begin{align*}
	\mathcal{R}_{i1} & = \sum_{t = 1}^T \mathbb{P} \left( \hat{\mu}_{i,t} \geq \mu_i + \frac{\Delta_i}{2} \right) \\
	& = \sum_{t = 1}^T \sum_{h = 1}^t \mathbb{P} \left( T_{i,t} = h \wedge \hat{\mu}_{i,t} \geq \mu_i + \frac{\Delta_i}{2} \right) \\
	& = \sum_{t = 1}^T \sum_{h = 1}^t \mathbb{P} \left( T_{i,t} = h \ \vert \ \hat{\mu}_{i,t} \geq \mu_i + \frac{\Delta_i}{2}\right) \mathbb{P} \left( \hat{\mu}_{i,t} \geq \mu_i + \frac{\Delta_i}{2} \right)\\
	& \leq \sum_{t = 1}^T \sum_{h = 1}^t \mathbb{P} \left( T_{i,t} = h \ \vert \ \hat{\mu}_{i,t} \geq \mu_i + \frac{\Delta_i}{2}\right) e^{-\frac{h \Delta_i^2}{2}}
\end{align*}
Where the last inequality is due to the Hoeffding inequality~\cite{hoeffding1963probability}. By relying on the fact that $\sum_{h = x+1}^\infty e^{-kh} \leq \frac{1}{k} e^{-kx}$ and by considering $x = \frac{t}{|N^+(i)|}$ we have:
\begin{align*}
	\mathcal{R}_{i1} & \leq \sum_{t = 1}^T \left( \sum_{h = 1}^{ \frac{t}{|N^+(i)|} } \underbrace{\mathbb{P} \left( T_{i,t} = h \ \vert \ \hat{\mu}_{i,t} \geq \mu_i + \frac{\Delta_i}{2}\right)}_{= 0} e^{\frac{h \Delta_i^2}{2}} + \frac{2}{\Delta_i^2} e^{ -\frac{ \frac{t}{|N^+(i)|} \Delta_i^2}{2}} \right) \\
	& = \sum_{t = 1}^T \frac{2}{\Delta_i^2} e^{\frac{ -\frac{t}{|N^+(i)|} \Delta_i^2}{2}} \leq C_2
\end{align*}
where $\mathbb{P} \left( T_{i,t} = h \ \vert \ \hat{\mu}_{i,t} \geq \mu_i + \frac{\Delta_i}{2}\right) = 0$ for $h \leq \frac{t}{|N^+(i)|}$ is due to the fact that the leader is chosen at least $\frac{t}{|N^+(i)|}$ over $t$ rounds and $C_2$ is a constant.

Let us focus on $\mathcal{R}_{i2}$ and the following proposition provided in~\cite{kaufmann2012thompson}:
\begin{proposition}
	If we use a TS policy over a set of finite arms $\{ a_i \}$ where $a_{i'}$ is the optimal one, there exist constants $b \in (0,1)$ and $C_b \leq \infty$ s.t.:
	\begin{equation}
		\sum_{t = 1}^\infty \mathbb{E} \left[ \mathbf{1} \{ T_{i',t} \leq t^b \} \right] \leq C_b.
	\end{equation}
\end{proposition}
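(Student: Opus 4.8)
The plan is to follow the strategy underlying this result in the reference of Kaufmann, Korda, and Munos: show that the optimal arm $a_{i'}$ of the finite set cannot remain under-sampled for long, because whenever it has been played a moderate number of times there is a constant per-round probability that its Thompson sample dominates every competitor. Consequently the gaps between consecutive pulls of $a_{i'}$ have geometric-type tails, and $\sum_t \mathbb{P}(T_{i',t}\le t^b)$ collapses to a finite constant. Throughout, write $\mu_{i'}=\max_k \mu_k$ for the optimal mean in this set and $\Delta_{\min}=\min_{k\neq i'}(\mu_{i'}-\mu_k)>0$ for the smallest gap, and fix $\delta$ with $2\delta<\Delta_{\min}$.

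First I would pass from the cumulative count $T_{i',t}$ to the \emph{inter-pull gaps}. Let $\tau_j$ be the round at which $a_{i'}$ is drawn for the $j$-th time; then $T_{i',t}=j$ exactly for $t\in[\tau_j,\tau_{j+1})$, so the event $\{T_{i',t}\le t^b\}$ forces one of the gaps $\tau_{j+1}-\tau_j$ with a small index $j\le t^b$ to be long. Summing $\mathbb{1}\{T_{i',t}\le t^b\}$ over $t$ can therefore be reorganized as a sum over $j$ of the lengths of these gaps (restricted to the rounds with $t\ge j^{1/b}$), reducing the claim to a summable bound on the tails $\mathbb{P}(\tau_{j+1}-\tau_j\ge \ell)$ together with a finite contribution from the initial segment of small $j$.

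To bound such a tail I would freeze the posterior of $a_{i'}$ during a gap—since $a_{i'}$ is not played, it is a fixed $Beta$ distribution built from the $j$ collected observations—and combine three ingredients. (i) A Hoeffding bound ensuring that after $j$ plays the empirical mean satisfies $\hat\mu_{i',t}\ge \mu_{i'}-\delta$ outside an event of probability $e^{-2j\delta^2}$. (ii) An anti-concentration estimate for the frozen $Beta$ sample: conditioned on this favorable empirical mean, the Thompson sample $\theta_{i',t}$ exceeds $\mu_{i'}-2\delta$ with probability at least a constant $c>0$, obtained from the $Beta$--Binomial tail identity. (iii) A concentration bound guaranteeing that once every suboptimal arm has been pulled sufficiently often, each of their samples $\theta_{k,t}$ lies below $\mu_{i'}-2\delta$ with high probability (here $2\delta<\Delta_{\min}$ is what separates the suboptimal means from the threshold). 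When (i)--(iii) hold simultaneously in a round, $a_{i'}$ carries the largest sample and is played; hence each round of a gap ``succeeds'' in pulling $a_{i'}$ with probability bounded below by a constant, giving the desired geometric tail. Summing these tails over $j$, weighted by the slow growth of $t^b$, and choosing $b\in(0,1)$ small enough that $t^b$ is attainable while the Hoeffding terms $e^{-2j\delta^2}$ remain summable, yields the finite constant $C_b$, depending on $\Delta_{\min}$, the number of arms, and $\delta$.

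The main obstacle is twofold and sits in ingredients (ii) and (iii). The $Beta$ anti-concentration in (ii) is delicate: on the atypical event that $a_{i'}$'s early draws are mostly failures, the frozen posterior concentrates below $\mu_{i'}$ and the sample rarely clears the threshold—this is precisely why the Hoeffding conditioning of (i) is indispensable, and why the small-$j$ regime (broad posterior, bound $t^b$ not yet binding) must be absorbed separately into $C_b$. Dually, in (iii) a suboptimal arm that is itself under-explored has a broad posterior and can produce a misleadingly large sample; one must therefore show that the number of rounds in which some suboptimal arm is simultaneously under-explored and inflated is itself summable, so that it does not destroy the per-round constant success probability. Marrying these estimates and verifying that the resulting double sum over $(j,t)$ converges for a suitable $b$ is the technical heart of the argument.
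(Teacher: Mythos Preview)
The paper does not prove this proposition at all: it is quoted verbatim as ``Proposition~1 in~\cite{kaufmann2012thompson}'' and then invoked as a black box inside the bound on $\mathcal{R}_{i2}$. So there is no ``paper's own proof'' to compare against; your proposal is an attempt to reconstruct the argument of the cited source, which is outside the scope of this paper.

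That said, your outline is a faithful high-level account of the Kaufmann--Korda--Munos strategy. One technical point worth sharpening: in that proof, ingredient~(iii) is not handled by waiting until \emph{every} suboptimal arm is well explored so that all competing samples are simultaneously small. Rather, the argument shows that whenever the optimal arm's frozen sample exceeds the threshold $\mu_{i'}-2\delta$ and yet $a_{i'}$ is \emph{not} pulled, some suboptimal arm must have produced a sample above $\mu_{i'}-2\delta$; a separate counting argument then bounds the total number of such ``inflated'' rounds for each suboptimal arm by a constant (via Beta concentration once that arm has been pulled enough, plus the observation that each inflated round adds a pull to that arm). This decoupling is what lets the geometric-tail estimate on $\tau_{j+1}-\tau_j$ go through without the simultaneity you describe, and it is exactly the ``marrying'' step you flag as the technical heart.
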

Similarly to what has been derived for $\mathcal{R}_{i1}$ we have:
\begin{align*}
	\mathcal{R}_{i2} &= \sum_{t = 1}^T \mathbb{P} \left( \hat{\mu}_{i',s} \leq \mu_{i'} - \frac{\Delta_i}{2} \right)\\
	& = \sum_{t = 1}^T \sum_{h = 1}^t \mathbb{P} \left( T_{i',t} = h \wedge \hat{\mu}_{i',s} \leq \mu_{i'} - \frac{\Delta_i}{2}  \right) \\
	& = \sum_{t = 1}^T  \sum_{h = 1}^{t^b} \mathbb{P} \left( T_{i',t} = h \wedge \hat{\mu}_{i',s} \leq \mu_{i'} - \frac{\Delta_i}{2} \right) + \sum_{t = 1}^T \sum_{h = t^b+1}^t \underbrace{\mathbb{P} \left(T_{i',t} = h \ \vert \ \hat{\mu}_{i',s} \leq \mu_{i'} - \frac{\Delta_i}{2} \right)}_{\leq 1} \mathbb{P} \left( \hat{\mu}_{i',s} \leq \mu_{i'} - \frac{\Delta_i}{2} \right)\\
	& \leq \sum_{t = 1}^\infty \mathbb{E} \left[ \mathbf{1} \{ T_{i',t} \leq t^b \} \right] + \sum_{t = 1}^T \sum_{h = t^b+1}^t \mathbb{P} \left( \hat{\mu}_{i',s} \leq \mu_{i'} - \frac{\Delta_i}{2} \right)\\
	& \leq C_b + \sum_{t = 1}^T \sum_{h = t^b+1}^t e^{-\frac{t \Delta_i^2}{2}}\\
	& \leq C_b + \sum_{t = 1}^T \frac{2}{\Delta_i^2} e^{-\frac{t^b \Delta_i^2}{2}} \leq C_3\\
\end{align*}
since we are using TS in among arms in $N(i)$ and the last inequality holds for all $b \in (0,1)$.

By considering the three partial results on $\mathcal{R}_1,\mathcal{R}_{i1},\mathcal{R}_{i2}$ we have:
\begin{align*}
	& \mathcal{R}_T(\text{\NAME{}}) \leq \mathcal{R}_1 + \sum_{i \neq i^*} \left( \mathcal{R}_{i1} + \mathcal{R}_{i2} \right) \\
	&= (1 + \eps) \sum_{i \in N(i^*)} (\mu^* - \mu_i) \frac{\log(T) + \log\log(T)}{KL(\mu_i,\mu^*)} + C_1 + (K-1) (C_2 + C_3) 
\end{align*}
considering $\tilde{C} = C_1 + (K-1) (C_2 + C_3) $ concludes the proof.
\end{proof}

\clearpage
\subsection{Appendix B: Additional Results on $p = \ell$}

In order to investigate the reasons why the performance of \NAME{} and the one of OSUB asymptotically match as $K$ increases when $p=\ell$, we produce additional experiments with the line graphs described in~\cite{combes2014unimodala}. We generated line graphs where the minimum expected reward is set to $0.1$ and the maximum expected reward varies: given any edge with terminals two consecutive nodes $i$ and $i+1$, we generated graphs where $\Delta = |\mu_i -\mu_{i+1}| \in \lbrace 0.001, 0.002, 0.005 \rbrace$. More precisely, when $K = 17$, the expected reward of the central arm $a_8$ is set to, respectively, $0.108$, $0.116$ and $0.14$. When $K = 129$, the expected reward of the central arm $a_{65}$ is set to, respectively, $0.165$, $0.23$ and $0.425$. The results for $T = 10^7$ are reported in Figure~\ref{img:res_delta_k17} for $K = 17$ and in Figure~\ref{img:res_delta_k129} for $K = 129$.

\begin{figure}[ht!]		
		\centering
		\subfloat[]{
				\includegraphics[scale=0.9]{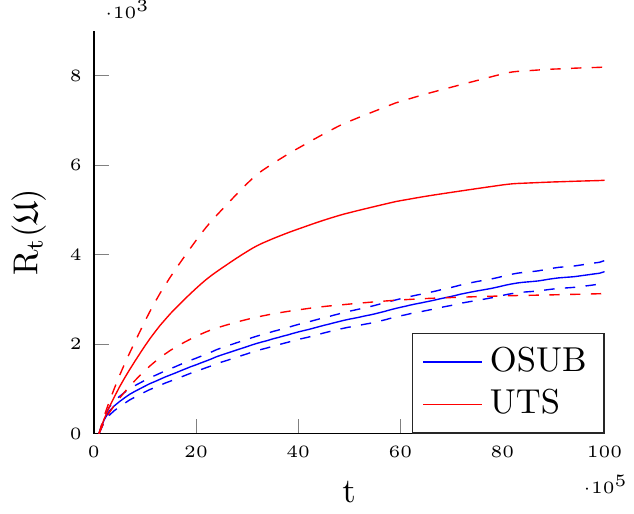}
		}
		\subfloat[]{
				\includegraphics[scale=0.9]{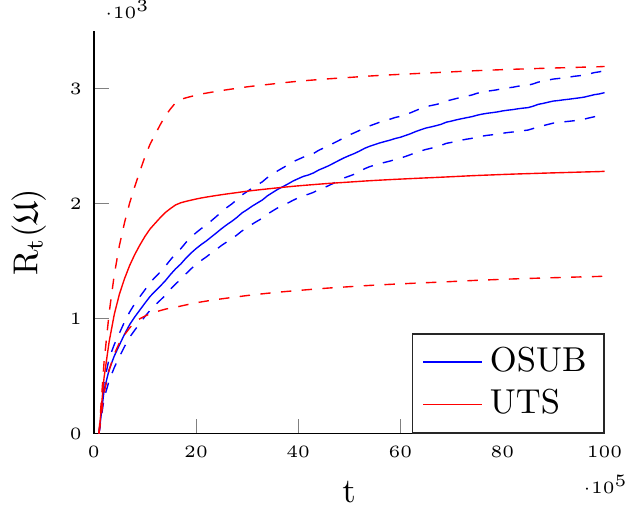}
		}
		\subfloat[]{
				\includegraphics[scale=0.9]{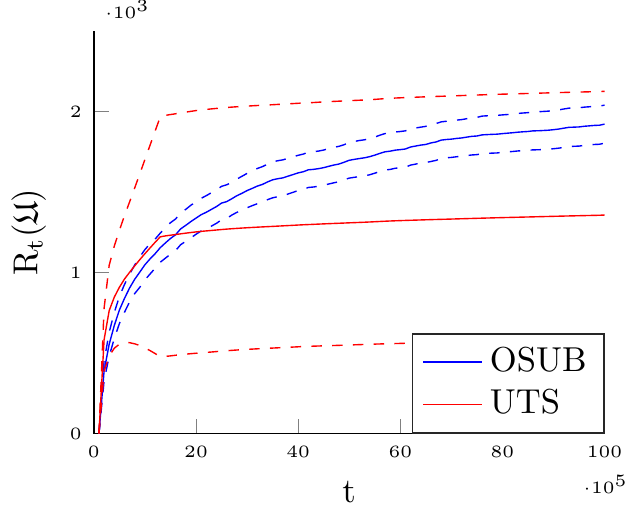}
		}
		\caption{Results for the pseudo--regret $R_t(\mathfrak{U})$ in the setting with $K = 17$, $p = \ell$ and $\Delta = 0.001$ (a), $\Delta = 0.002$ (b) and $\Delta = 0.005$ (c).}
		\label{img:res_delta_k17}
\end{figure}

\begin{figure}[ht!]		
		\centering
		\subfloat[]{
				\includegraphics[scale=0.9]{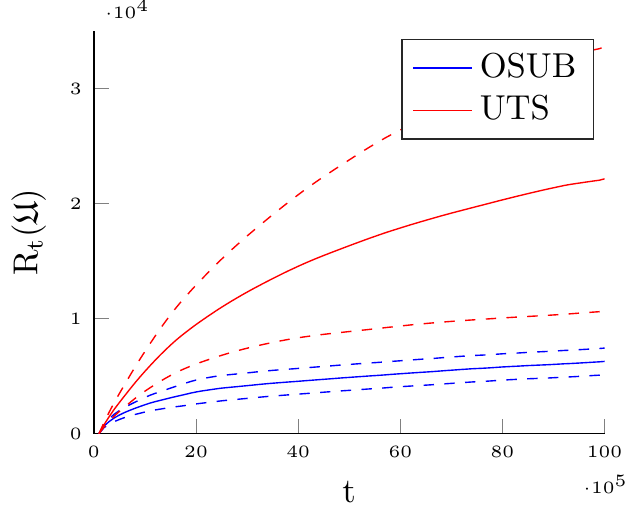}
		}
		\subfloat[]{
				\includegraphics[scale=0.9]{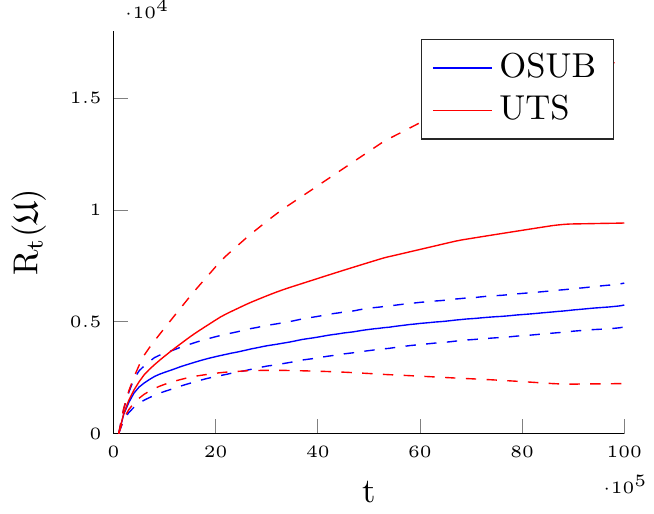}
		}
		\subfloat[]{
				\includegraphics[scale=0.9]{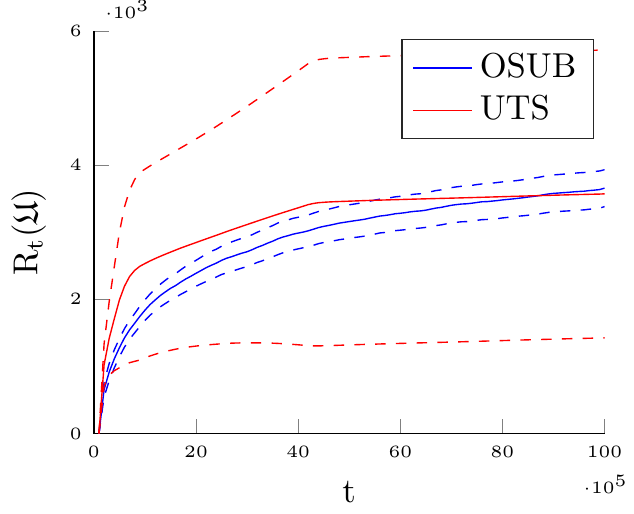}
		}
		\caption{Results for the pseudo--regret $R_t(\mathfrak{U})$ in the setting with $K = 129$, $p = \ell$ and $\Delta = 0.001$ (a), $\Delta = 0.002$ (b) and $\Delta = 0.005$ (c).}
		\label{img:res_delta_k129}
\end{figure}

We observe that, at $T = 10^7$ with $K = 17$ and $\Delta = 0.001$, on average OSUB outperforms \NAME{} while, with $\Delta \in \lbrace 0.002, 0.005 \rbrace$, at the end of the experiments \NAME{} outperforms OSUB. This behavior suggests that, even in the case with $\Delta = 0.001$, \NAME{} will perform better than OSUB for $T > 10^7$. In the case with $K = 129$, $\Delta = 0.005$ and $T = 10^7$, \NAME{} outperforms OSUB at the end of the experiments while with $\Delta \in \lbrace 0.001, 0.002 \rbrace$ OSUB performs better. Following the same line of reasoning, for $T > 10^7$ this could no longer be true.
All these results suggest that when it is necessary to repeatedly distinguish between three arms that have very similar expected rewards and very low expected rewards, frequentist methods may outperform the Bayesian ones at the beginning of the learning process, while Bayesian methods asymptotically outperform frequentist ones. In particular, we observe that the relative performance of OSUB w.r.t.~\NAME{} varies for $\Delta \in \{0.001,0.002,0.005\}$, observing it improves as $\Delta$ decreases.

Finally, we evaluate whether this behavior emerges also in Erd\H{o}s-R\'enyi  graphs in which $p = \frac{c}{K}$ where $c$ is a constant. We use $p \in \lbrace \frac{5}{K}, \frac{10}{K} \rbrace$ and $T = 10^6$. We observe that \NAME{} outperforms OSUB, suggesting that line graphs with very small $\Delta$ represent pathological instances for \NAME{}. The results are reported in Figure~\ref{img:res_c_K}.

\begin{figure}[ht!]		
		\centering
		\subfloat[]{
				\includegraphics[scale=0.9]{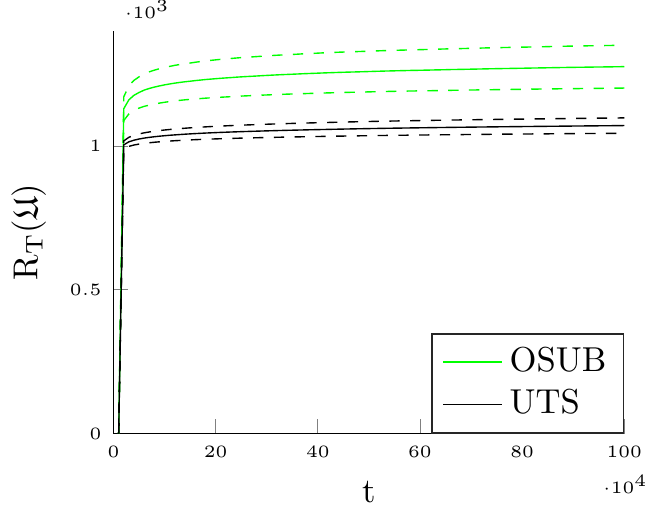}
		}
		\subfloat[]{
				\includegraphics[scale=0.9]{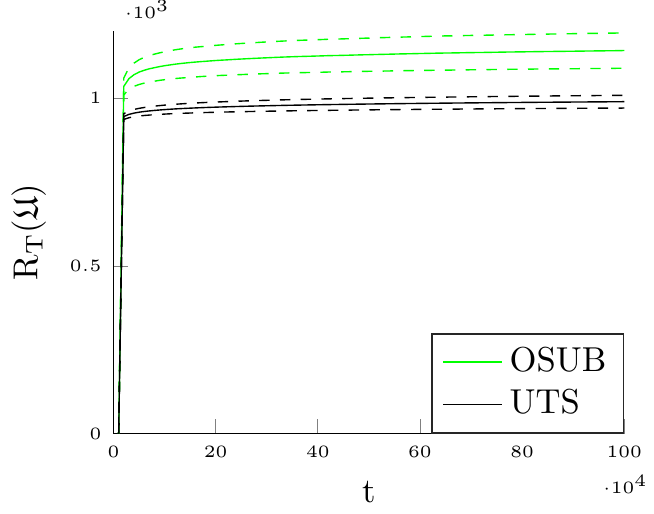}
		}
		\caption{Results for the pseudo--regret $R_t(\mathfrak{U})$ in the setting with $K = 1000$, $p = \frac{5}{K}$ (a) and $p = \frac{10}{K}$ (b).}
		\label{img:res_c_K}
\end{figure}

\end{document}